\def\eqref#1{equation~\ref{#1}}
\def\1{\bm{1}}
\DeclareMathAlphabet{\mathsfit}{\encodingdefault}{\sfdefault}{m}{sl}
\SetMathAlphabet{\mathsfit}{bold}{\encodingdefault}{\sfdefault}{bx}{n}
\def\eqref#1{equation~\ref{#1}}
\def\1{\bm{1}}
\DeclareMathAlphabet{\mathsfit}{\encodingdefault}{\sfdefault}{m}{sl}
\SetMathAlphabet{\mathsfit}{bold}{\encodingdefault}{\sfdefault}{bx}{n}
\newtheorem{theorem}{Theorem}[section]
\title{Fast, Convex and Conditioned Network for Multi-Fidelity Vectors and Stiff Univariate Differential Equations}
\author{Siddharth Rout\\
Institute of Applied Mathematics\\
University of British Columbia\\
Vancouver, BC, Canada \\
\texttt{siddharth.rout@ubc.ca}\\
}
\begin{document}
\maketitle
\begin{abstract}
Accuracy in neural PDE solvers often breaks down not because of limited expressivity, but due to poor optimisation caused by ill-conditioning—especially in multi-fidelity and stiff problems. We study this issue in Physics-Informed Extreme Learning Machines (PIELMs), a convex variant of neural PDE solvers, and show that asymptotic components in governing equations can produce highly ill-conditioned activation matrices, severely limiting convergence. We introduce Shifted Gaussian Encoding, a simple yet effective activation filtering step that increases matrix rank and expressivity while preserving convexity. Our method extends the solvable range of Peclet numbers in steady advection–diffusion equations by over two orders of magnitude, achieves up to six orders lower error on multi-frequency function learning, and fits high-fidelity image vectors more accurately and faster than deep networks with over a million parameters. This work highlights that conditioning, not depth, is often the bottleneck in scientific neural solvers—and that simple architectural changes can unlock substantial gains. 
\end{abstract}


\section{Introduction}
\label{introduction}

Neural networks have demonstrated remarkable versatility and effectiveness across a broad range of applications, including images and audios \cite{8632885,9451544,peebles2023scalable}, natural language processing \cite{peebles2023scalable,chang2024survey,thirunavukarasu2023large}, and complex scientific modelling \cite{thiyagalingam2022scientific,cuomo2022scientific, pathak2022fourcastnet}. 
Despite these advancements, these powerful networks fail to impress the scientific community. For an instance, some simple mathematical or physical symbolic functions cannot be learned by deep neural networks \cite{liu2024kan}. Key issues arise while fitting to intricate data patterns, chaotic dynamics \cite{chattopadhyay2020data}, addressing differential equations (DEs) with varying orders of terms \cite{rout2019numerical,arzani2023theory}, ensuring stable training \cite{Goodfellow-et-al-2016}, and overcoming optimisation stiffness \cite{rout2019numerical}. When it comes to accurate prediction for large-scale scientific applications such as weather prediction and particularly for prediction of rare events such as heat waves, droughts, tornadoes, and cyclones, the inherent complex and chaotic high-dimensional nonlinear dynamics become very difficult to learn from. 

\subsection{Motivation}
\label{motivation}

\subsubsection{Neural differential equation solvers}
\label{ndesolver}
Solving differential equations is fundamental to many scientific and engineering fields, underpinning the modelling of phenomena in fluid dynamics, structural analysis, climate science, and finance. Traditional numerical methods—such as finite element, finite difference, and spectral methods—are well-established for their robustness and accuracy. However, they can be computationally intensive and often struggle with high-dimensional or complex problems. The emergence of automatic differentiation \cite{baydin2018automatic} and high-performance computing frameworks \cite{cuda2008, tensorflow2015, torch2017} has enabled deep learning \cite{lecun2015deep} to offer alternative approaches. These methods harness the function approximation capabilities of neural networks to address some of the limitations of classical techniques.

Physics-Informed Neural Networks (PINNs)—based on techniques developed decades ago \cite{dissanayake1994neural, lagaris1998artificial, RAISSI2019686}—have recently emerged as a powerful framework for solving differential equations. By embedding physical laws directly into the neural network's loss function, PINNs try to ensure that their predictions are consistent with the governing equations of the system. Unlike conventional machine learning methods, PINNs do not require training data and eliminate the need for specialised numerical schemes, meshing, or discretisation. This effectively recasts the problem of solving partial differential equations (PDEs) as a data-free optimisation task, where the solution is represented by a neural network that approximates the true function. Essentially, PINNs pose solving PDEs as optimisation problems. Hence, the exactness of the solution is dependent on how easy the optimisation problem is, and hence, proof of convergence is not guaranteed.  

The use of deep neural networks in this context offers several key advantages. Their universal approximation capability allows them to represent complex, nonlinear functions with high accuracy \cite{Cybenko1989ApproximationBS, HORNIK1989359, HORNIK1991251}. Once trained, these networks can produce solutions in real time, significantly reducing computational costs compared to traditional iterative solvers. Moreover, their flexibility enables them to handle noisy inputs and complex boundary conditions more effectively. Unlike traditional numerical methods, which solve discretised versions of the original equations and whose accuracy depends heavily on the number of collocation points, PINNs solve the exact differential equation. As a result, the accuracy of the solution is less dependent on the density of collocation points, especially when the target function is smooth.

PINNs have demonstrated their utility across various domains, including biomedics \cite{arzani2021, Sahli2020foPhys}, fluid mechanics \cite{raissi2020, cai2021physics}, uncertainty quantification \cite{Zhu_2019, YANG2019136}, and many more. Most of these successful applications are inverse problems. The method for forward problems is a bit tricky. The generalizability of the method is not questionable. However, when it comes to competing with the robustness of traditional methods, PINNs are lagging far behind \cite{femVspinn}. Rout \cite{rout2019numerical} shows that though neural networks are boasted for their strong and universal approximation capabilities which strengthen with increasing depth and width of a network, a small and shallow network could fit to the data points denoting the solution of an extremely advection-dominated ODE, but could not train itself to the loss function generating exact gradients from the governing ODE to update the weights. The depth and width of the network had a minimal role in the convergence of the loss function. Training PINNs is in fact an optimisation problem and hence, can be challenging, especially for applied problems which often involve sharp gradients, multiple constraints, and high frequencies. Rout \cite{rout2019numerical} emphasizes on advection-dominated PDEs while highlighting some of these issues and demonstrating ways to tackle them. They essentially adopt various optimisation tricks to reduce stiffness and optimise better. Some of the tricks, like piecewise approximation, weighted loss terms, and step/gradual optimisation, can be seen as effective. Stiff optimisation is a special characteristic evident in PINNs, unlike typical deep learning inverse problems. Wang et al. \cite{wang2021understanding} also show that the use of weighted loss terms is very effective in reducing the stiffness of the problem. Also for the same reason, most of the work using PINNs for forward problems \cite{RAISSI2019686, femVspinn, SHUKLA2021110683, jagtap2020extended, Shin2020OnTC} prefer a second-order optimiser specifically, L-BFGS. McClenny and Braga-Neto \cite{mcclenny2023self} use an attention mechanism to relax the stiffness. As the field of scientific machine learning continues to evolve, addressing these challenges will be crucial for further advancing the capabilities and applications of neural networks in solving complex PDEs across diverse scientific and engineering disciplines.

\subsubsection{Convexity of Networks}
\label{convexity}
Neural networks can be very non-convex. LeCun et al. \cite{lecun2002efficient} discussed the series of problems they faced while training a network, practically arising due to the non-convexity. Networks can come in many shapes and sizes, for which the use of higher-order optimisation methods may not look reasonable. This sparked interest among the computing community to explore convex surrogates or relaxations. Bengio et al. \cite{greedyLayerWise} characterised the optimisation problem for NNs as a convex program if the output loss function is convex in the NN output and if the output layer weights are regularised by a convex penalty. Bach \cite{bach2008consistency} studied connections between neural networks and convex optimisation by relating infinitely wide single-layer networks to kernel methods, which are convex in nature. Some later breakthroughs are primarily based on the ideas of shallow network and/or kernel method. Hazan et al. \cite{hazan2015beyond} demonstrated that certain classes of shallow neural networks could be trained using convex optimisation frameworks in online settings, motivating new algorithms. The recent breakthroughs have been linking convex optimisation tools to network learning. Jacot et al. \cite{jacot2018neural} proposed the neural tangent kernel, where the gradients of weights are mapped to those of the kernel method, which is known to be convex. Bach \cite{bach2017breaking} discussed how neural networks can exploit structured convex sets to simplify high-dimensional problems, further linking convex optimisation to modern deep learning. While Ergen and Pilanci \cite{ergen2021revealing} connected two-layer neural networks to convex optimisation through convex duality. Essentially, the aim has been to bridge neural networks with classic convex optimisation tools. 

In terms of approximation strength, a point to note is that the approximation ability of a single-layer network is also universal \cite{Cybenko1989ApproximationBS, bach2008consistency}. It seems sensible to simplify the network to understand the formulation of the cost function and not always think about a deep network. If the number of weights is big enough to provide a sufficient number of superpositions, as per \cite{Cybenko1989ApproximationBS}, then we are done. And, quite often, the required number of weights to express a function or set is much less than what we use, yet still the network does not fit. Rout \cite{rout2019numerical} in chapter 2 provides an example of such a case with some interesting results, which is also discussed in \cref{steadyAdvectionDiffusion}. The exact solution to the ODE problem mentioned there can be represented by a single-layered network with only two nodes. Essentially, we know the values the weights should converge to. However, on optimising the network, it can be noticed that it is not that easy to get the exact weights, especially in cases where advection dominates the diffusion more. Rout \cite{rout2019numerical} in chapter 3 demonstrated how if the loss function is a differential expression of the network output, the optimisation can get even tougher. In both the problem setups, the network is supposed to have learned the same function. With the advent of automatic differentiation \cite{baydin2018automatic}, many algorithms have gained popularity under the name umbrella name physics-informed neural networks \cite{cuomo2022scientific}. So, it is just to state that training a neural network can pose cumbersome optimisation problems because of how we pose the problem. Specifically, the training task should not always be about what the expressivity of a network is but also rather about how well the loss function can be optimised. A significant number of serious reasons are mentioned in the subsection \cref{orderOfscales} to push us to study the optimisation aspect of PDE problems.       

For the reason described in the previous section, we choose a particular kind of neural networks called extreme learning machines (ELM) \cite{huang2006extreme}. The speciality of this algorithm is its simplicity. To convert a network to ELM, we just fix all the weights and biases except those in the last layer. With this assumption, the network output becomes linear in terms of weights and biases. Hence, the training of this algorithm is solving a system of linear equations, which can be easily solved in a single step by Moore-Penrose generalised inverse \cite{rao1972generalized}. The good part is the convexity of this formulation, which is proved in \cref{th: elmconvexity}.

\subsubsection{Ill-conditioning of Networks}
\label{illcondition}
A network architecture can be called better than others if it can learn the task more, as in the approximation is closer to the exact solution, and/or it can learn faster. Training neural networks is a game of function approximation capability and optimisation. Neural networks, by default, are non-convex formulations \cite{ergen2021revealing}. Various studies have highlighted various obstacles and ill-conditionings present in the training process of deep networks \cite{Goodfellow-et-al-2016}. Hence, several attempts have been made over the decades to address such issues. Saarinen et al. \cite{cybenko1993illconditioning} showed that feedforward neural networks can easily have ill-conditioned Hessians and concluded that many network training problems are ill-conditioned and may not be solved more efficiently by higher-order optimisation methods. Smagt and Hirzinger \cite{vanderSmagt1998} showed that the ill-conditioning is because of the structure of the network, hence presented a less popular modified structure of the network. In comparatively recent advancements along the line, popular have been by various normalizations procedures \cite{batchNorm,ba2016layer,ulyanov2016instance}, like batch norm, layer norm and instance norm, to allow inputs to the active region of activation functions in each layer.  Some typical options to tackle such problems are the use of higher-order optimisers, regularisations, adaptive hyperparameterization, etc. Li et al. \cite{li2016preconditioned} relied upon gradient descent powered by Langevin dynamics to act as a stochastic preconditioner. Ill-conditioning can severely impact the efficiency and effectiveness of training, leading to slow convergence and sub-optimal performance. No certain way to completely eradicate ill-conditioning in optimisation problems exists. Hence, newer methods for conditioning are always encouraged and looked up to. 

\section{Contributions}
\label{contribution}
With these motivations, from optimisation and conditioning point of view, it makes sense to look into the training of ELM based differential equation solvers \cite{dwivedi2020physics,rout2019numerical}, called PIELM. The fundamental problem of possible ill-conditioning of such a solver is pointed out. Specifically, it is highlighted that asymptotic terms in an equation can be a reason for ill-conditioning. This issue can lead to significant stiffness in the training process, complicating the optimisation of neural models. Based on the understanding a novel neural architecture is proposed which gives well conditioned activation matrix. We propose a novel encoding/filtering procedure using shifted Gaussian functions and shifted ReLU to filter the activations from the previous layer in a particular fashion to generate a well-conditioned activation matrix.

\section{Preliminaries}
Let us say for any differential equation defined by \begin{equation}
\label{eq:1}
\mathcal{N}[u](\mathbf{x}) = 0,
\end{equation}
with constraints at $\mathbf{X}_u$ we have 
\begin{equation}
\label{eq:2}
u(\mathbf{X}_u) = \mathbf{u}_o.
\end{equation} 
The algorithms for solving the PDE defined by \cref{eq:1} and \cref{eq:2} using PINN and PIELM are mentioned in the subsequent subsections.

\subsection{Physics-Informed Neural Networks}
The \cref{pinn_algo} shows the pseudocode for PINNs as per the vanilla settings in \cite{RAISSI2019686}. The method uses training data points $(\mathbf{X}_u, \mathbf{u})$ to fit the neural network with trainable parameters $\theta$ denoted by $u_\theta(\mathbf{x})$, and minimize the residual of $\mathcal{N}[u](\mathbf{x})$ at collocation points $\mathbf{X}_f$.


\subsection{Physics-Informed Extreme Learning Machine (PI-ELM) Algorithm}

The PI-ELM algorithm integrates the principles of extreme learning machines (ELMs) with PINNs to solve differential equations efficiently. The vanilla ELMs can be considered as simplified neural networks since they typically have only a single layer of weights to learn and all other layers are fixed with random constants\cite{huang2006extreme}. Amazingly, despite such simplifications, the theory of universal approximation remains valid\cite{Huang2006UniversalAU, HUANG20073056}. The beauty of this method, which makes it special, is that the loss function could be represented as a set of equations that are nonlinear in terms of input variables and, however, linear in terms of trainable weights. Hence, the weights can be easily calculated by solving the system of linear equations, which is also the reason for the extreme speed of training single-layer networks.  

The \cref{algo_pielm} shows the pseudocode for PI-ELM. Similar to the problem definition in PINNs, this method also uses training data points $(\mathbf{X}_u, \mathbf{u})$ to fit the neural network with trainable parameters $\beta$ denoted by $u_\beta(\mathbf{x})$ and minimise the residual of $\mathcal{N}[u](\mathbf{x})$ at collocation points $\mathbf{X}_f$. Trained output weights $\mathbf{\beta}$ are obtained by solving the system of linear equations.

\begin{theorem}
\label{th: elmconvexity}
Let \(\mathcal{L}\) be a loss function of a single-layered neural network as per the definition of an extreme learning machine, whose trainable weights are \(\beta\), then \(\mathcal{L}\) is convex in \(\beta\).
\end{theorem}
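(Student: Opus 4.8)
The plan is to exploit the defining feature of an ELM: because every hidden weight and bias is fixed, the network output depends on the trainable parameters $\beta$ only through a linear (affine) map. First I would write the ELM output in the form $u_\beta(\mathbf{x}) = \sum_j \beta_j \phi_j(\mathbf{x})$, where each $\phi_j$ is the fixed activation of the $j$-th hidden node; collecting these into a feature vector gives $u_\beta(\mathbf{x}) = \phi(\mathbf{x})^\top \beta$, which is manifestly linear in $\beta$. This is exactly the structural property that distinguishes an ELM from a general deep network, and it is the engine of the whole argument.

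Next I would push this linearity through the governing operator. For the linear differential operators treated in this paper (for example the steady advection--diffusion operator discussed in \cref{steadyAdvectionDiffusion}), differentiation and linear combination commute, so $\mathcal{N}[u_\beta](\mathbf{x}) = \sum_j \beta_j \, \mathcal{N}[\phi_j](\mathbf{x})$ is again linear in $\beta$. Evaluating the data constraint at $\mathbf{X}_u$ and the residual at the collocation points $\mathbf{X}_f$, and stacking the resulting rows, I obtain a fixed matrix $A$ and vector $b$ such that every constraint is captured by the single affine expression $A\beta - b$, where $A$ and $b$ do not depend on $\beta$.

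The PIELM loss is the squared residual of this system, $\mathcal{L}(\beta) = \lVert A\beta - b \rVert_2^2$, and I would conclude in either of two equivalent ways. Directly: expanding the square shows $\mathcal{L}$ is a quadratic form with Hessian $\nabla^2 \mathcal{L} = 2A^\top A$, and since $v^\top A^\top A\, v = \lVert A v \rVert_2^2 \ge 0$ for all $v$, the Hessian is positive semidefinite, so $\mathcal{L}$ is convex. Alternatively, via composition: the map $t \mapsto \lVert t - b \rVert_2^2$ is convex, the map $\beta \mapsto A\beta$ is affine, and a convex function composed with an affine map is convex. The Hessian route is the more self-contained of the two and is the one I would write out.

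The step I expect to require the most care --- and a clearly stated hypothesis --- is the second one: the claim that the physics residual stays affine in $\beta$ rests on $\mathcal{N}$ being a \emph{linear} operator, and more generally on the loss being convex in the network output, precisely the condition identified by Bengio et al. in the cited characterisation. For a nonlinear $\mathcal{N}$ the map $\beta \mapsto \mathcal{N}[u_\beta]$ need not be affine and convexity can genuinely fail. I would therefore state the linearity (and convex-loss) assumption explicitly at the outset, after which every remaining step is routine linear algebra.
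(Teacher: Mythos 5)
Your proof is correct, and its engine is the same as the paper's: the ELM output is affine in \(\beta\), so the squared-residual loss is a quadratic form whose Hessian \(2A^{\top}A\) is positive semidefinite, hence convex. The paper's appendix proof computes exactly this---gradient, then Hessian \(2AA^{\top}\) with \(A=\phi(Wx+b)\)---but \emph{only} for the pure data-fitting loss \(\lVert Y-\beta\phi(Wx+b)\rVert_2\); it never treats the physics residual. Your second step, pushing the linearity through a linear differential operator so that \(\mathcal{N}[u_\beta]=\sum_j\beta_j\,\mathcal{N}[\phi_j]\) and stacking data and collocation rows into a single system \(A\beta-b\), is a genuine extension: it is what is actually needed when the theorem is invoked for the PIELM loss (\cref{eq:PIELM_Inversion}), where residuals at collocation points enter \(\mathcal{L}\), so your argument covers the use case the paper cites the theorem for while the paper's own proof strictly does not. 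Your explicit flagging of the hypothesis---\(\mathcal{N}\) linear, or more generally the loss convex in the network output---is likewise a real improvement: for nonlinear \(\mathcal{N}\) the map \(\beta\mapsto\mathcal{N}[u_\beta]\) need not be affine and convexity can genuinely fail, a caveat the paper leaves implicit. One small point of hygiene in your favour: the paper writes \(\mathcal{L}\) as an unsquared \(\ell_2\) norm yet differentiates it as if squared; your squared formulation is the internally consistent one, and your composition argument (norm composed with an affine map) covers the unsquared case as well.
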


\begin{proof}
Check \cref{pf: elmconvexity}
\end{proof}

\section{Proposed Architecture}
The architecture we propose is a layer of encoding after the layer of linear combination of weights to the inputs. Following the procedure of algorithm \cref{algo_pielm}, we make a change to the equation \cref{eq:elm} following the same set of definitions and notations. The replacement equation is stated as:
\begin{equation}
\label{encoded_ELM}
u_\beta(\mathbf{x}) = \phi(\mathbf{W} \mathbf{x} \odot E(\mathbf{x} - \mu) + \mathbf{b}) \mathbf{\beta},
\end{equation}
where $E$ is the encoding function, $\odot$ is a Hadamard product, $\mu$ is a set {$\mu_i = \frac{i}{L} | i \in [0, 1,..,L]$} and $L$ is one less than the number of hidden nodes.

\subsection{Shifted Gaussian Encoding} 
In this type, the encoding function $E$ is defined as:
\begin{equation}
E(x) = e^{-\frac{x^2}{d}},    
\end{equation}
where d is called the filter width.


\section{Experiments and Observations}
\label{expts}
\subsection{Scales of Gradients and 5 possible setbacks}
\label{orderOfscales}
Let us assume an ordinary differential equation as \cref{eq:normalODE}
\begin{equation}
\label{eq:normalODE}
a_0 u + a_1 \frac{du}{dx} + a_2 \frac{d^2 u}{dx^2} + .. + a_n \frac{d^n u}{dx^n} = b.
\end{equation}

Like the concept we have been discussing, in order to solve the above equation by minimising the L2 norm of residuals by fitting neural networks $N(\mathbf{w},x)$ at random collocation points in the domain, we define the loss function $L$ as: 
\begin{equation}
\label{eq:normalLoss}
L(\mathbf{w},x) = || a_0 N(\mathbf{w},x) + a_1 \frac{dN(\mathbf{w},x)}{dx} + .. + a_n \frac{d^n N(\mathbf{w},x)}{dx^n} - b ||_2.
\end{equation}

Since we opt for gradient descent to find the least square fit, we can update each trainable weight ($\mathbf{w}$) by the following update \cref{eq:gradDesc}.

\begin{equation}
\label{eq:gradDesc}
\mathbf{w}_{k+1} = \mathbf{w}_k - \eta \nabla L(\mathbf{x}_k, x)
\end{equation}
where $\nabla$ is the gradient of the loss function and $\eta$ is the learning rate. Where, 
\begin{equation}
\label{eq:grad}
\nabla L(\mathbf{x}_k, x) = \sqrt{L}
(a_0 \frac{\partial N}{\partial \mathbf{w}} + a_1 \frac{\partial^2 N}{\partial \mathbf{w} \partial x} + .. + a_n \frac{\partial^{n+1} N} {\partial \mathbf{w} \partial x^n} ).
\end{equation}

Looking at \cref{eq:grad}, we can make a couple of serious observations. Firstly, the gradient descent depends on a set of derivatives of different orders of the network output and in nonlinear cases, it can depend on derivatives of different degrees as well. This should be leading to competing gradients due to each term suggesting a clear difficulty in finding a gradient step for reaching the global or Pareto optimum of the residual. Secondly, more is the number of different terms with different orders, the tougher it gets for the optimiser as it has to deal with more terms of different orders of magnitudes. It might require multiscale tactics. Thirdly, the approximating test function has to be flexible and capable enough to learn a function whose derivatives of different orders and are of different scales. The more varying the scales of each order of differentials are, which is a definite thing in perturbation or asymptotic problems, the more robust the test function might be needed. Fourthly, for the same conceptual reason, with reference to \cref{eq: multiobj}, the loss function has two kinds of loss terms as described in \cref{eq: datafit} for data fitting for the variable and \cref{eq: pdefit} for fitting to the differential equations. As such, this is a multi-objective optimisation problem, which brings in the chances of non-Pareto solutions. Of course when the scales of the data fitting error and the PDE residuals do not match. It demands opting for weighted residual instead of averaged residuals as in vanilla PINN. The same observation is described in a couple of works \cite{ rout2019numerical, wang2021understanding}, where the relative scaling of terms and variables helps us in getting better solutions.  Additionally, from the perspective of machine precision and numerical computing, at perturbed or asymptotic regions in the domain, some of the derivatives can jump off the machine limit and hence crash the training or poorly approximate crucial gradients. Knowing all these factors, now when we checkout the work by Grossman et al. \cite{femVspinn} on why neural differential equation solvers cannot beat finite element methods, we would not be surprised. It would not be wrong to say that using neural differential equation solvers is primarily solving optimisation problems.

\begin{wrapfigure}{l}{0.55\textwidth}
    \centering
    \includegraphics[width=0.55\textwidth]{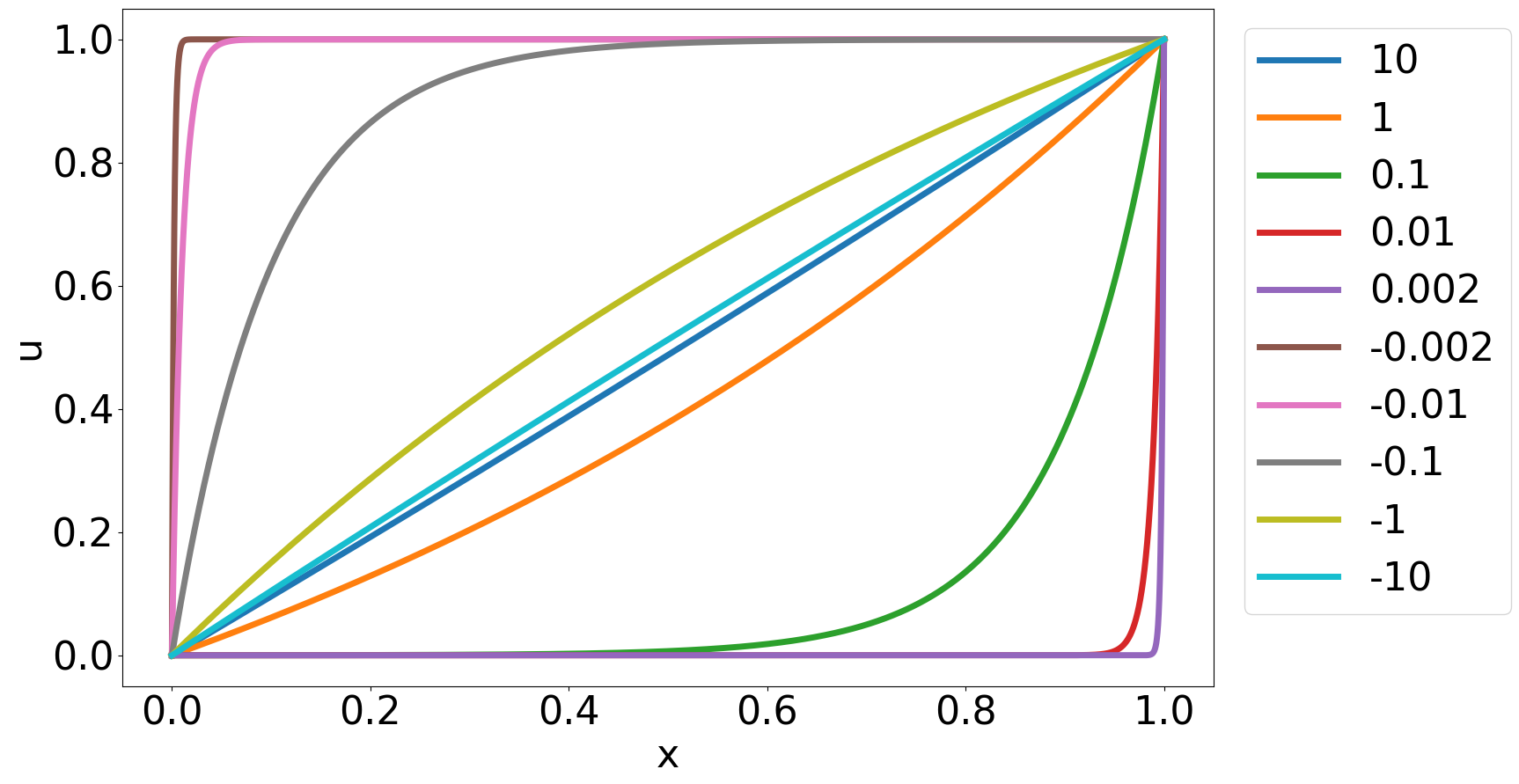}
    \caption{Exact solutions of steady 1D ADE for different $\epsilon$.}
    \label{fig_1dAD}%
\end{wrapfigure}

\subsection{Solving Steady 1-Dimensional Linear Stiff Advection Diffusion Equation}
\label{steadyAdvectionDiffusion}

Let us consider the case of solving one-dimensional steady advection diffusion equation (ADE) for $u$ where $\epsilon \in \mathbb{R}$ is the diffusivity (a constant) and $x \in [0,1]$ as in \cref{eq:1dADE}
\begin{equation}
\label{eq:1dADE}
\frac{\partial u}{\partial x} = \epsilon \frac{\partial^2 u}{\partial x^2},
\end{equation}
\begin{equation}
u(x=0) = 0, \text{ and } u(x=1) = 1.\\
\end{equation}

Luckily, we know the exact solution of this problem. We can derive it by integrating the equation twice and substituting the boundary values. It is given by

\begin{equation}
\label{eq:1dADE_exact}
u(x) = \frac{e^{x/\epsilon} - 1}{e^{1/\epsilon} - 1}.
\end{equation}

The good part is that the solution is a function of a single exponential activation function, which means the function is expressible with very few nodes (2+). The \cref{fig_1dAD} shows the solution for different diffusivities. The solution is very sensitive to perturbations around zero diffusivity and gives a very different solution for the left and the right limits. The smaller is the diffusivity, the sharper is the gradient. Numerically, solving this problem for smaller diffusivity is tougher, as it requires much finer discretisation, at least around the region of sharper gradients. A realistic scale of diffusivity can be much smaller than what we see in the figure, typically found in sonic waves and fluid dynamics.    

The \cref{fig_1dAD_PIELM} shows that a typical neural differential equation solver (PIELM \cite{rout2019numerical}) can solve for $\epsilon \in (\approx 0.06, \infty) $. The finite element approach of piecewise approximation with neural networks, works better, but is expensive and looses the charm of using a powerful approximator. The \cref{fig_1dAD_PIDELM} shows how such a solver, PIDELM \cite{dwivedi2020physics}, can solve for $\epsilon \in (\approx 0.02, \infty) $. The \cref{fig_1dAD_PINDELM} shows how such a solver with asymptotic scaling based transformation, PINDELM \cite{rout2019numerical}, can solve for $\epsilon \in (\approx 0.0025, \infty)$.

So, the motif of exploring how well we can solve using a single neural network for a side ODE remains unfulfilled. This motivates us to study the functional composition of the network and the loss function. 

\subsubsection{Statistics of Activation Matrix}
We start looking at the steps in the algorithm of PIELM, refered in \cref{algo_pielm}. The formulation of the loss function is convex as proved in \cref{th: elmconvexity}. So, ill-conditioning of inner function might be an obvious possibility. \cref{eq:PIELM_Inversion} is the step we focus on. In the equation, we look at the statistics of the activation matrix $H$.

For a case, where we take an ELM of 1000 nodes to solve for $\epsilon = 1.0$ using 1000 collocation points, which generates a square activation matrix $H$, of 1002 rows, and hence the maximum possible rank is 1002. The case is solvable with appreciable accuracy. However, on generating the matrix, the rank it has is 14, the determinant is $\approx 0.0$, and the condition number is $1.11e21$. The rank is too small and the condition number is too large. The \cref{fig_PIELM_eigens} shows how poor the condition is. The \cref{fig_PIELM_A} shows how dense the matrix is. It is clear evidence that the linear system we are solving is extremely ill-conditioned. Accordingly, the \cref{fig_PIELM_WnB} shows how, depending on $\epsilon$, the magnitudes of residual (mean absolute error) as well as the weights increase sharply as it reduces down from $\approx 0.1$. This suggests why a penalty loss term using L1 or L2 regularisation on weights, might help. 

With PIELM on our architecture with shifted Gaussian encoding, we look upon the statistics of the activation matrix $H$. For a case, where we take an ELM of 1000 nodes to solve for $\epsilon=1.0$ using 1000 collocation points, which generates a square activation matrix $H$, of 1002 rows, and hence the maximum possible rank is 1002. The case is solvable with appreciable accuracy. On generating the activation matrix with filter width, d, is 0.0001, the rank gets to 702, the determinant is $\approx 0.0$, and the condition number is $7.68e20$. Though the condition number has not improved much, the rank has improved significantly. The \cref{fig_PIELM_eigens_our} shows better information in the activation matrix. The \cref{fig_PIELM_A_our} shows how the activation matrix is now sparse and diagonal.

\begin{table}[!h]
\centering 
\begin{tabular}{l c c c} 
 \hline
 Architecture & $Domain(\epsilon)$ & Rank($H$) & $\mathcal{O}(MAE)$ \\
 & & & for $\epsilon = 0.01$ \\ 
 \hline
 Typical Network &  [$\approx$1e-1, $\infty$) & 14 & 1e-1 \\
 \textbf{Our Network} & \textbf{[$\approx$1e-3, $\infty$)} & \textbf{702} & \textbf{1e-8} 	 \\ 
 \hline
\end{tabular}
\vspace{0.2cm}
\caption{Comparison of results from using a typical single-layered neural network and our network with a layer of filtering. $\mathcal{O}(MAE)$ is the order-of-magnitude mean absolute error}
\label{TableComparion}
\end{table}

\begin{figure}
        \begin{minipage}[h]{.48\textwidth}
	\centering 
	\includegraphics[width=\textwidth]{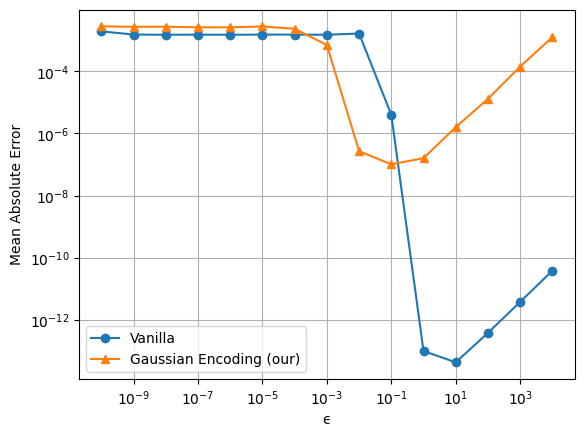}	
	\caption{Comparison of mean absolute error for different $\epsilon$.}  
	\label{fig_comp}
    \end{minipage}
    \hfill
    \begin{minipage}[h]{.48\textwidth}
	\centering 
	\includegraphics[width=\textwidth]{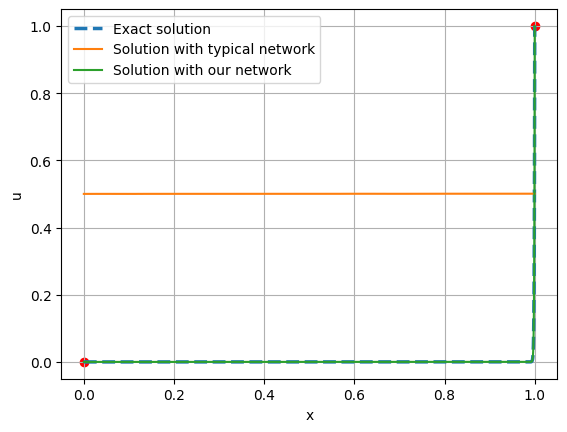}	
	\caption{Solution to an extremely small diffusivity $\epsilon = 1e-3$ with red dots depicting the exact boundaries.}  
	\label{fig_1e-6}
    \end{minipage}
\end{figure}

\subsection{Learning Multi-fidelity Trends in Data}
\label{stiff Data}
The goal of this study is to learn stiff and complex patterns in data. So, here the aim is designed as to train a neural network to approximate a complex target function that exhibits non-linear, oscillatory, and rapidly varying behaviour using the data points generated by the function. The function is defined by:
\begin{equation}
f(x) = \sin(10x) + 0.2 \cos(20x + 50x^2) + e^{-100x^2} \sin(200x),
\end{equation}

where:
\begin{itemize}
    \item The term \(\sin(10x)\) introduces a smooth oscillatory component with a moderate frequency.
    \item The term \(0.2 \cos(20x + 50x^2)\) adds a high-frequency oscillatory component modulated by a quadratic phase term.
    \item The term \(e^{-100x^2} \sin(200x)\) contributes a highly localised oscillatory feature, with rapid damping due to the Gaussian factor \(e^{-100x^2}\).
\end{itemize}

The generated data points can be seen in the \cref{fig_stiffdata}. The problem we propose here is not that easy. Learning this function poses several challenges that make this problem very exciting. The presence of oscillatory terms with varying high frequencies, such as \(\sin(10x)\) and \(\sin(200x)\), demands that the network accurately captures and represents these stiff variations. The Gaussian term \(e^{-100x^2}\) introduces sharp, localised features that require the network to effectively resolve fine details in the input space. The quadratic phase term \(50x^2\) in \(\cos(20x + 50x^2)\) creates a non-linear dependency of frequency that complicates the learning task. 

\begin{figure}[!h]
	\centering 
	\includegraphics[width=\linewidth]{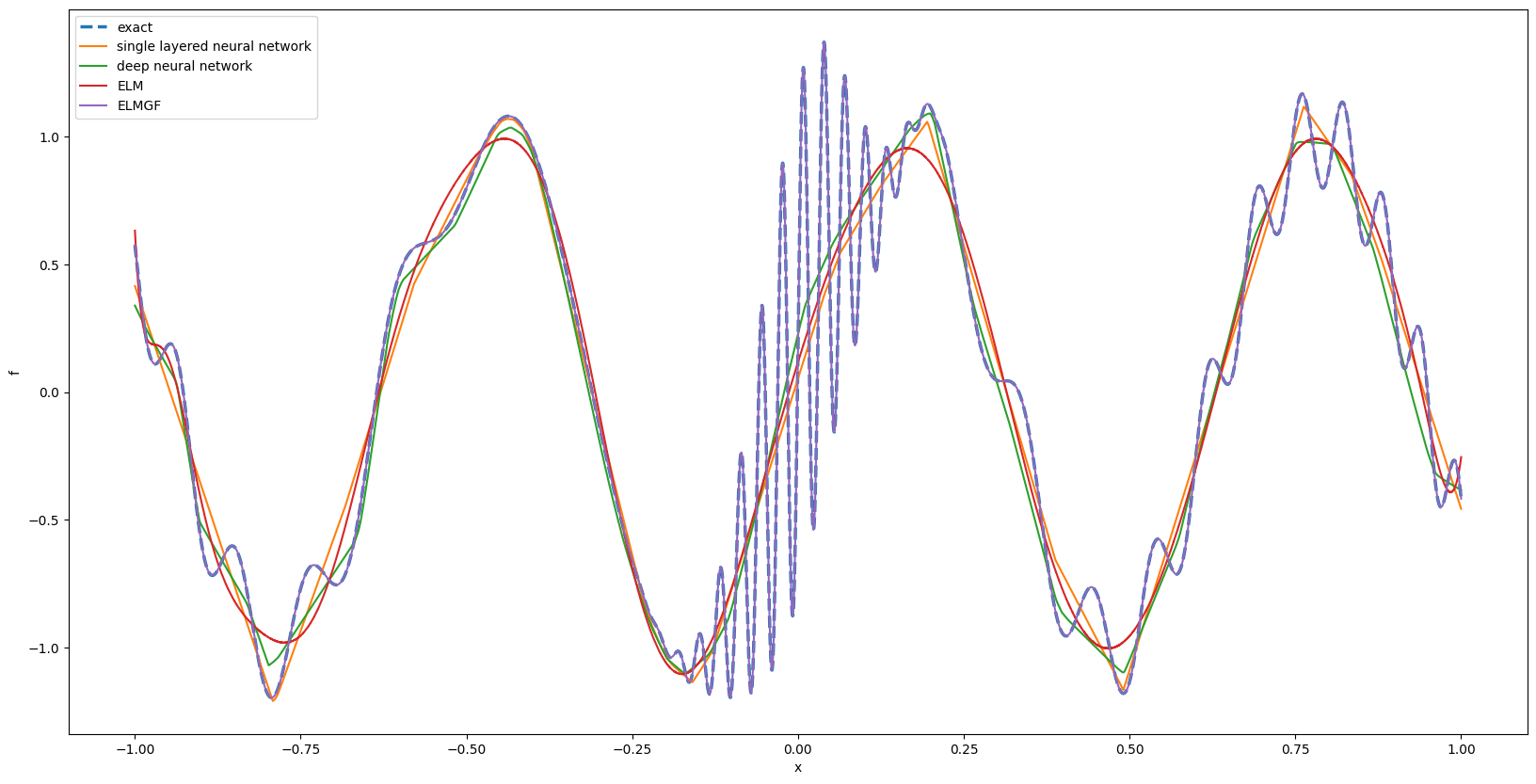}	
	\label{fig_stiffdatacomp}
        \caption{Trends learned with various methods.} 
\end{figure}

The results to compare the learnability with different methods can be observed in \cref{fig_stiffdata}. A deep neural network (DNN) with four hidden layers with 100, 1000, 1000, and 100 nodes, respectively, with ReLU activation is used to demonstrate that even a deep network with more than a million parameters is not able to learn the high-frequency patterns. After that, as per reducing approximation strength, a single-layered network (SLN) with 1000 nodes and an ELM with 1000 nodes are used, and as expected, the mean squared error (MSE) increases as per the sequence of approximation strength. The \cref{TableDataComparion} shows the error values for different methods. It is clearly evident that our ELM with Gaussian Encoding, which is conditioned for better expressivity of activations having the same 1000 hidden nodes, has a much smaller MSE. This suggests that it is not the approximation capability we need to look for, but how well the model can be trained is an important factor. Also our conditioned ELM takes a lesser runtime to find the pseudoinverse and hence learns the weights faster. It can also be observed in \cref{TableDataComparion}. The Gaussian filter width (d) in our model controls the condition of the activation matrix. So, with a smaller d, we make the activation matrix of ELM get sparser and a better (higher) rank. So, if d is 1e-2 it shows a little better approximation than typical methods, while if d is 1e-3, it shows much more better results as it learns almost all visible trends.    

An ELM with the first set of weights randomly assigned and the activation matrix we see has a rank of 16 out of a maximum of 1000. The \cref{fig_ELMMatrix} shows the density distribution of the gram activation matrix of a typical ELM with our data points from the complex functional. The ELM takes in 10000 distinct data points to express the information with 1000 hidden nodes, that is, our activation matrix of 10000 rows and 1000 columns. Its rank is 16, and the network could have expressed 1000 different eigenvectors, while we do not recover the information  16 out of 16 might be a low-level expression of the available information in the data points. whereas when the activation matrix of the same size in the case of our method is compared, it can express the information from the same 10000 data points through 1000 hidden nodes as 356 different eigenvectors, that is, the rank we get. Our model is in fact able to learn the last set of weights easily. Our model could perform better as it could not only learn dominant low frequency eigenmodes but also a sufficient number of imapctful high-frequency eigenmodes. Our gram matrix can be seen in \cref{fig_OurELMMatrix}, it is much more structured. So, a great learning is to not just take a big network but also understand if the activation from the previous layer of a network is expressing the information to the next layer sufficiently. Ideally, we would wish to create a layer that sufficiently expresses itself. In case the input data is expressible with 1000 vectors and our model layer has 100 nodes for a lower-order representation so, the rank of an ideal activation matrix should be 100. If not we should try to keep the rank as high as possible. Having 100 nodes yet the rank is let's say 10, would mean only 10-20 nodes would have been sufficient to perform what it is performing at the moment.  

\begin{table}
\centering 
\begin{tabular}{l c c c} 
 \hline
 Method & \# Trainable & MSE & Training Time \\
 & Parameters &  & (s) \\
 \hline
 DNN & 1.2M+ & 4.2e-3 & 179.968 \\
 SLN & 3001 & 4.5e-3 & 31.448 \\
 ELM & 1000 & 4.5e-3 & 6.686 	 \\ 
 \textbf{Our ELM (d=1e-2)} & \textbf{1000} & \textbf{1.1e-3} & \textbf{5.548} \\
 \textbf{Our ELM (d=1e-3)} & \textbf{1000} & \textbf{4.9e-09} & \textbf{4.694} \\
 \hline
\end{tabular}
\vspace{0.2cm}
\caption{Comparison of results from using a typical single-layered neural network and our network with a layer of filtering.}
\label{TableDataComparion}
\end{table}

\subsection{Fitting VanGogh's Eye}
\label{Fit_Img}
A single-channel image of Van Gogh's eye is vectorised to pose a complex multiscale vector to be fitted. The \cref{TableImgComparion} shows the comparison of performance and \cref{fig_vangogh} shows the images obtained by different methods.

\begin{table}
\centering 
\begin{tabular}{l c c c} 
 \hline
 Method & \# Trainable & MSE & Training Time \\
 & Parameters &  & (s) \\
 \hline
 DNN & 1.2M+ & 1.39e-2 & 118.67 \\
 SLN & 3001 & 1.41e-2 & 5.35 \\
 ELM & 1000 & 1.36e-2 & 11.76 	 \\ 
 \textbf{Our ELM (d=1e-2): ELMGF} & \textbf{1000} & \textbf{1.32e-2} & \textbf{18.62} \\
 \textbf{Our ELM (d=1e-4): ELMGF2} & \textbf{1000} & \textbf{3.29e-3} & \textbf{7.22} \\
 \textbf{Our ELM (d=1e-5): ELMGF3} & \textbf{5000} & \textbf{1.23e-3} & \textbf{220.87} \\
 \hline
\end{tabular}
\vspace{0.2cm}
\caption{Comparison of results for fitting the image of Van Gogh's Eye.}
\label{TableImgComparion}
\end{table}

\begin{figure}[!h]
	\centering 
	\includegraphics[width=\linewidth]{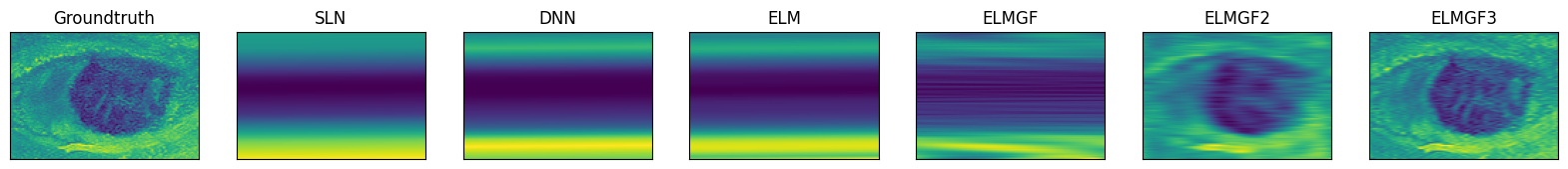}	
	\caption{Learned images with various methods.} 
	\label{fig_vangogh}
    \vspace{-0.5cm}
\end{figure}

\section{Results and Discussions}
Consequently, \cref{fig_comp} shows how, depending on $\epsilon$, the magnitudes of the residual (mean absolute error) are different for different architectures. Shifted Gaussian encoding improves the result from vanilla architecture over a comparatively wider domain. The results with shifted Gaussian encoding are great achievements for computational scientists working with hyperbolic PDEs, as no neural network could ever approximate such a sharp gradient in advection-diffusion problems. The other best solutions for singularly perturbed problems either involve high fidelity higher order discretised finite approximation that are expensive and might require manual modelling \cite{Stynes_2005,MOHEBBI20103071}, otherwise solve for asymptotic approximations \cite{arzani2023theory, KADALBAJOO20103641} which poorly match the whole domain and fail at different scales \cite{Hinch_1991}. Similarly, the method is brilliant at fitting to complex one-dimensional equations and vectors.

\section{Conclusion}
A conditioning-focused architecture for scientific neural solvers by combining the convexity of Extreme Learning Machines with \textit{Shifted Gaussian Encoding} is presented. This approach substantially improves the rank and sparsity of activation matrices, enabling PIELMs to solve stiff DEs and capture multi-scale patterns far beyond the reach of conventional shallow or deep baselines.

The experiments demonstrate broader solvability, including advection--diffusion equations down to $\epsilon \approx 10^{-3}$ without domain decomposition or asymptotic preprocessing; higher accuracy at lower cost, achieving orders-of-magnitude improvement in error for oscillatory functions while reducing pseudoinverse computation time; and strong generality, with robust performance across synthetic functions, DEs, and high-dimensional vectors such as image fitting.

These results suggest that, for many scientific learning problems, improving \emph{conditioning} can be more impactful than increasing network depth or width. Future work will focus on formalising the link between encoding width and matrix spectrum, and exploring adaptive encodings for dynamic conditioning across training.

\section*{Acknowledgements}
Special thanks to Prof. Eldad Haber, Prof. Michael Friedlander, Prof. Balaji Srinivasan, Dr. Vikas Dwivedi, and Benjamin Liu.

\bibliography{references}
\bibliographystyle{abbrv}

\clearpage

\appendix
\section{Theorems and Proofs}
\begin{theorem}
Let \(\mathcal{L}\) be a loss function of a single-layered neural network as per the definition of an extreme learning machine whose trainable weights are \(\beta\), then \(\mathcal{L}\) is convex in \(\beta\).
\end{theorem}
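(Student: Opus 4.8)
The plan is to exploit the defining property of an extreme learning machine: since every parameter except the output weights $\beta$ is fixed to random constants, the network output is \emph{affine} in $\beta$. Writing the hidden-layer activations evaluated at an input $\mathbf{x}$ as the (fixed) row vector $H(\mathbf{x}) = \phi(\mathbf{W}\mathbf{x}+\mathbf{b})$, the ELM response is $u_\beta(\mathbf{x}) = H(\mathbf{x})\beta$, which is linear in $\beta$. The first step is to record this observation carefully, since it is the crux of the entire argument.

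Next I would propagate affineness through the differential operator. For a linear operator $\mathcal{N}$ (as in the advection--diffusion model of \cref{steadyAdvectionDiffusion}), differentiation commutes with the fixed activations, so $\mathcal{N}[u_\beta](\mathbf{x}) = \mathcal{N}[H](\mathbf{x})\,\beta$ is again linear in $\beta$; here $\mathcal{N}[H]$ denotes the same differential combination applied to the fixed basis functions rather than to $\beta$. Stacking the residual over all collocation points $\mathbf{X}_f$ together with the data-fitting mismatch over the constraint points $\mathbf{X}_u$ then yields a single affine map $\beta \mapsto A\beta - c$, where the rows of $A$ are the fixed activations and their derivatives and $c$ collects the boundary and target values.

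The loss is an $\ell_2$ (or squared $\ell_2$) penalty on this residual, so I would finish by invoking the standard fact that the composition of a convex function with an affine map is convex. Concretely, for the squared loss $\mathcal{L}(\beta) = \|A\beta - c\|_2^2$ one computes the Hessian $\nabla^2_\beta \mathcal{L} = 2A^\top A$, which is positive semidefinite for any $A$, so convexity follows immediately. For the unsquared norm one instead uses that $\|\cdot\|_2$ is convex and that precomposition with an affine map preserves convexity. Either route closes the proof.

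The main obstacle is not the algebra but stating the hypotheses at the right level of generality. The argument hinges entirely on the residual being affine in $\beta$, which in turn requires the governing operator $\mathcal{N}$ to be linear; for a genuinely nonlinear $\mathcal{N}$ the residual can be a nonconvex function of $\beta$ and the claim would fail. I would therefore make explicit that the statement is for linear governing equations together with any convex loss functional of the residual, so that the clean convex-composed-with-affine structure is available; everything else is routine.
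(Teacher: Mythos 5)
Your proposal is correct, and its core mechanism is the same as the paper's: the ELM output is affine in \(\beta\), so the squared residual has Hessian \(2A^{\top}A \succeq 0\) (the paper writes this as \(2AA^{\top}\) for its row-vector convention) and convexity follows. However, your route is genuinely more general than what the paper actually proves, in a way worth noting. The paper's proof in \cref{pf: elmconvexity} treats only the pure data-fitting loss \(\|Y - \beta\phi(Wx+b)\|_2\); it never touches the physics-informed part of the loss, even though the theorem is invoked in the body of the paper to assert convexity of the full PIELM formulation of \cref{algo_pielm}, whose loss also contains the residuals \(\mathcal{N}[u_\beta](\mathbf{x}_f^j)\). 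Your proof closes exactly that gap: by observing that a \emph{linear} operator \(\mathcal{N}\) applied to \(u_\beta = H\beta\) yields rows \(\mathcal{N}[H]\,\beta\) that are still affine in \(\beta\), you can stack data rows and collocation rows into one affine map \(\beta \mapsto A\beta - c\) and invoke the convex-composed-with-affine lemma, which also handles the unsquared norm (the paper writes \(\|\cdot\|_2\) but differentiates as if it were \(\|\cdot\|_2^2\); your argument covers both without that inconsistency). Your caveat is also substantive rather than cosmetic: for a nonlinear governing operator the physics residual is no longer affine in \(\beta\), the Hessian argument breaks, and the theorem as stated can fail — a hypothesis the paper leaves implicit. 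In short, the paper's computation is quicker but proves a narrower statement; your version is the one that actually supports the claim made where the theorem is cited, at the price of making linearity of \(\mathcal{N}\) an explicit assumption.
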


\begin{proof}
\label{pf: elmconvexity}
Given, \(\mathcal{L}\) is the loss function of an extreme learning machine where let the single-layered neural network be (\(\mathcal{N}\)) of \(L\) hidden nodes. Let (\(\mathcal{N}\)) be a function of input \(x \in \mathbb{R}^{C_1 \times n}\), fixed weight \(W \in \mathbb{R}^{L \times C_1}\), fixed bias \(b \in \mathbb{R}^L\) and trainable weight \(\beta \in \mathbb{R}^{C_2 \times n}\)  to predict as defined by
\begin{equation}
\mathcal{N}(X, \beta, W, b) = \beta\phi(Wx + b),
\end{equation}
where \(\phi\) is a nonlinear activation function, \(L \in \mathbb{N}\) is the number of nodes in the hidden layer, \(C_1 \in \mathbb{N}\) is the number of input nodes, \(C_2 \in \mathbb{N}\) is the number of output nodes and \(n\) is the number of samples.

If the output is \(Y \in \mathbb{R}^{C_2 \times n}\), 
\begin{equation}
\mathcal{L}(\beta) = ||Y - \beta\phi(Wx + b)||_2.
\end{equation}

So, the gradient with respect to \(beta\) is given by
\begin{equation}
\nabla\mathcal{L}(\beta) = 2(Y - \beta\phi(Wx + b))(-\phi(Wx + b))^T.
\end{equation}

So, the Hessian with respect to \(beta\) is given by
\begin{equation}
\nabla^2\mathcal{L}(\beta) = 2(\phi(Wx + b))(\phi(Wx + b))^T.
\end{equation}

If \(A = \phi(Wx + b)\) then the Hessian \(\mathcal{H}\) can be written as,

\begin{equation}
\mathcal{H} = 2 AA^T.
\end{equation}

Since \(AA^T\) is positive semidefinite, \(AA^T \geq 0\).
So,
\begin{equation}
    \mathcal{H} = \nabla^2\mathcal{L}(\beta) \geq 0.
\end{equation}

Hence, \(\mathcal{L}\) is convex in terms of the weight \(\beta\).

\end{proof}
\section{Pseudocodes}
The \cref{pinn_algo} shows the algorithm for training physics-informed neural networks. The \cref{algo_pielm} shows the algorithm for training physics-informed extreme learning machine.
\begin{algorithm}[!h]
\caption{PINN}
\begin{algorithmic}[1]
\label{pinn_algo}

\STATE \textbf{Initialization}
Initialize neural network parameters $\theta$ (weights and biases)

\STATE \textbf{Define the Neural Network}
Construct the neural network $u_\theta(\mathbf{x})$ with parameters $\theta$

\STATE \textbf{Formulate the Loss Functions}
Define data fitting loss:
\begin{equation}
\label{eq: datafit}
\mathcal{L}_{\text{data}} = \frac{1}{N_u} \sum_{i=1}^{N_u} \left| u_\theta(\mathbf{x}_u^i) - u^i \right|^2.
\end{equation}

Define physics-informed loss using the governing differential equation $\mathcal{N}[u] = 0$:
\begin{equation}
\label{eq: pdefit}
\mathcal{L}_{\text{phys}} = \frac{1}{N_f} \sum_{j=1}^{N_f} \left| \mathcal{N}[u_\theta](\mathbf{x}_f^j) \right|^2.
\end{equation}
Combine the losses:
\begin{equation}
\label{eq: multiobj}
\mathcal{L} = \mathcal{L}_{\text{data}} + \mathcal{L}_{\text{phys}}.
\end{equation}

\STATE \textbf{Training}
Minimize the total loss $\mathcal{L}$ with respect to the network parameters $\theta$:
\begin{equation}
\theta^* = \arg\min_\theta \mathcal{L}(\theta).
\end{equation}

\STATE \textbf{Inference}
Use the trained network $u_{\theta^*}$ to make predictions for new inputs $\mathbf{X}_{\text{new}}$:
\begin{equation}
\mathbf{\hat{u}} = u_{\theta^*}(\mathbf{X}_{\text{new}}).
\end{equation}

\end{algorithmic}
\end{algorithm}

\begin{algorithm}[!h]
\caption{PI-ELM}
\begin{algorithmic}[1]
\label{algo_pielm}

\STATE \textbf{Initialization}
Randomly initialize input weights $\mathbf{W} \in \mathbb{R}^{L \times n}$ and biases $\mathbf{b} \in \mathbb{R}^L$.
Define activation function $\phi(\cdot)$

\STATE \textbf{Formulate Composite Loss Function}
Define data fitting loss: 
\begin{equation}
\mathcal{L}_{\text{data}} = \sum_{i=1}^{N_u} \left| u(\mathbf{x}_u^i) - u^i \right|^2.
\end{equation}
Define physics-informed loss:
\begin{equation}
\mathcal{L}_{\text{phys}} = \sum_{i=1}^{N_f} \left| \mathcal{N}[u(\mathbf{x}_f^i)] \right|^2.
\end{equation}
Combine the losses:
\begin{equation}
\mathcal{L} = \mathcal{L}_{\text{data}} + \mathcal{L}_{\text{phys}}.
\end{equation}

\STATE \textbf{Generate the System of Equations}
The equation for the neural network $u_\beta(\mathbf{x})$ is
\begin{equation}
\label{eq:elm}
u_\beta(\mathbf{x}) = \phi(\mathbf{W} \mathbf{x} + \mathbf{b}) \mathbf{\beta}.
\end{equation}

The effective system of linear equations in $\mathcal{L}$ could be rewritten in the form:
\begin{equation}
\label{eq:PIELM_Inversion}
\mathbf{H} \mathbf{\beta}  = \mathbf{T}.
\end{equation}

\STATE \textbf{Training}
Solve for the output weights using the Moore-Penrose pseudoinverse of $\mathbf{H}$ denoted by $\mathbf{H}^+$ which is essentially the least square solution:
\begin{equation}
\mathbf{\beta} = \mathbf{H}^+ \mathbf{T}.
\end{equation}

\STATE \textbf{Inference}
Use the trained PI-ELM model to predict outputs for new inputs $\mathbf{X}_{\text{new}}$:
\begin{equation}
\mathbf{\hat{Y}} = \phi(\mathbf{W} \mathbf{X}_{\text{new}} + \mathbf{b}) \mathbf{\beta}.
\end{equation}

\end{algorithmic}
\end{algorithm}

\section{PINNs 1D Steady Advection-Diffusion Equation}
\subsection{Comparison of existing methods}
See \cref{fig_1dAD_PIELM,fig_1dAD_PIDELM,fig_1dAD_PINDELM}.
\begin{figure}[!h]
        \centering
        \includegraphics[width=\linewidth]{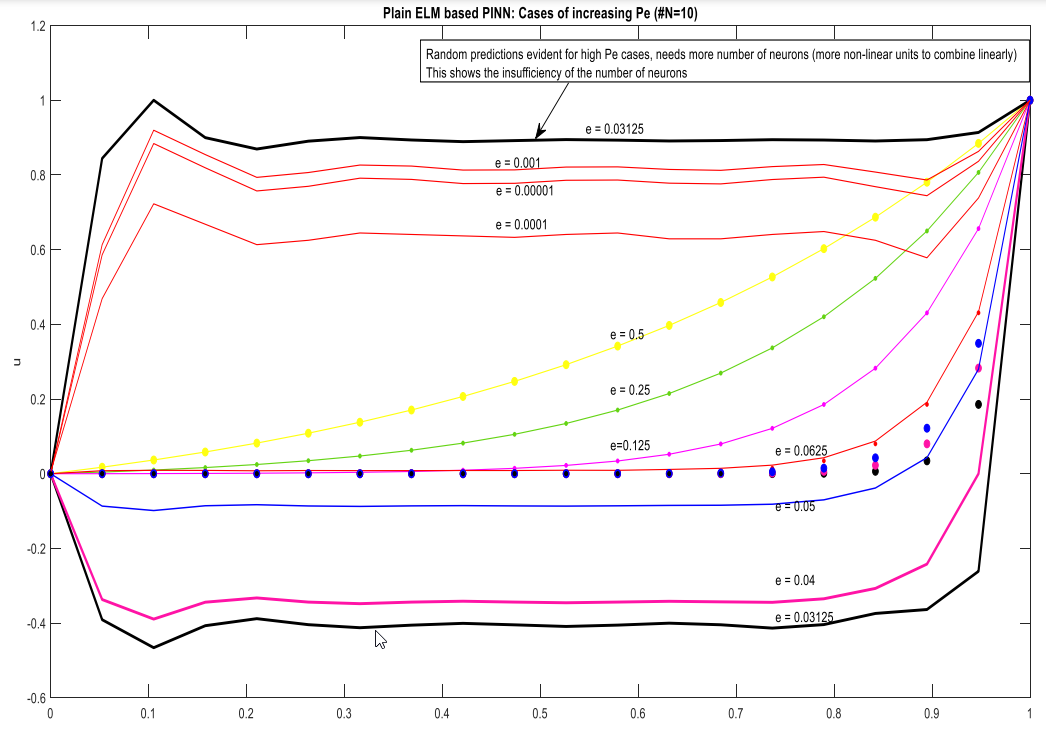}	
	\caption{Solutions of steady 1D ADE for different $\epsilon$ values with PIELM in continuous lines and exact solutions in dots.} 
        \label{fig_1dAD_PIELM}
\end{figure}

\begin{figure}[!h]
        \centering
        \includegraphics[width=\linewidth]{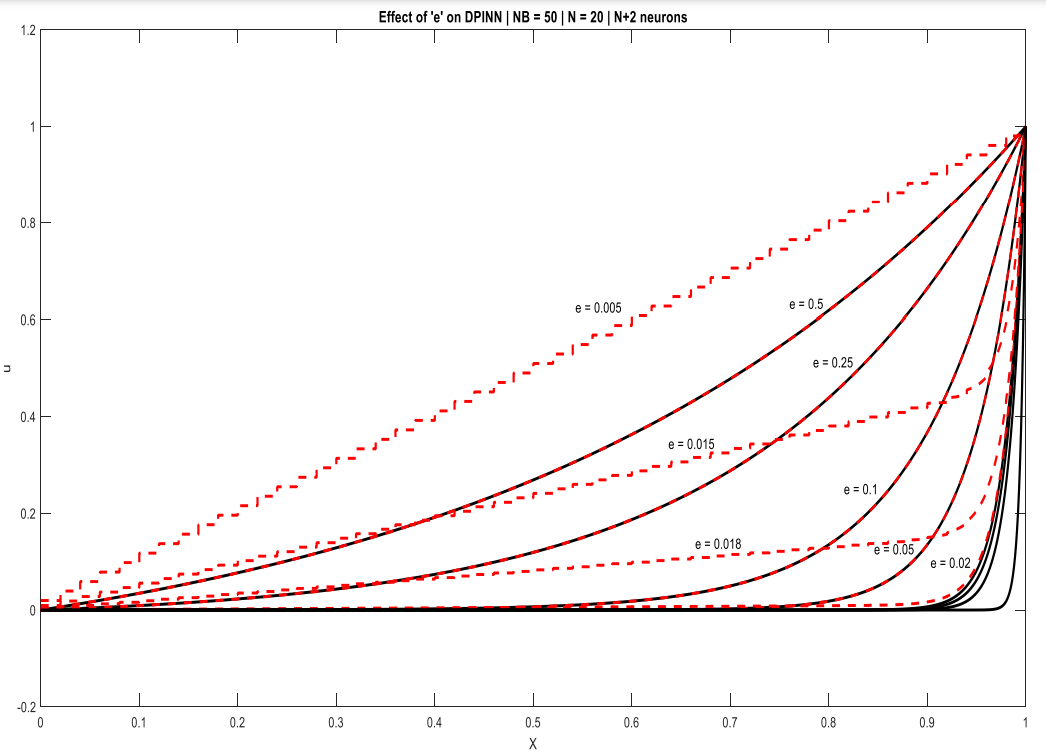}	
	\caption{Solutions of steady 1D ADE for different $\epsilon$ values with PIDELM in red dashed lines and exact solutions in black lines.} 
	\label{fig_1dAD_PIDELM}
\end{figure}

\begin{figure}[!h]
        \centering
        \includegraphics[width=\linewidth]{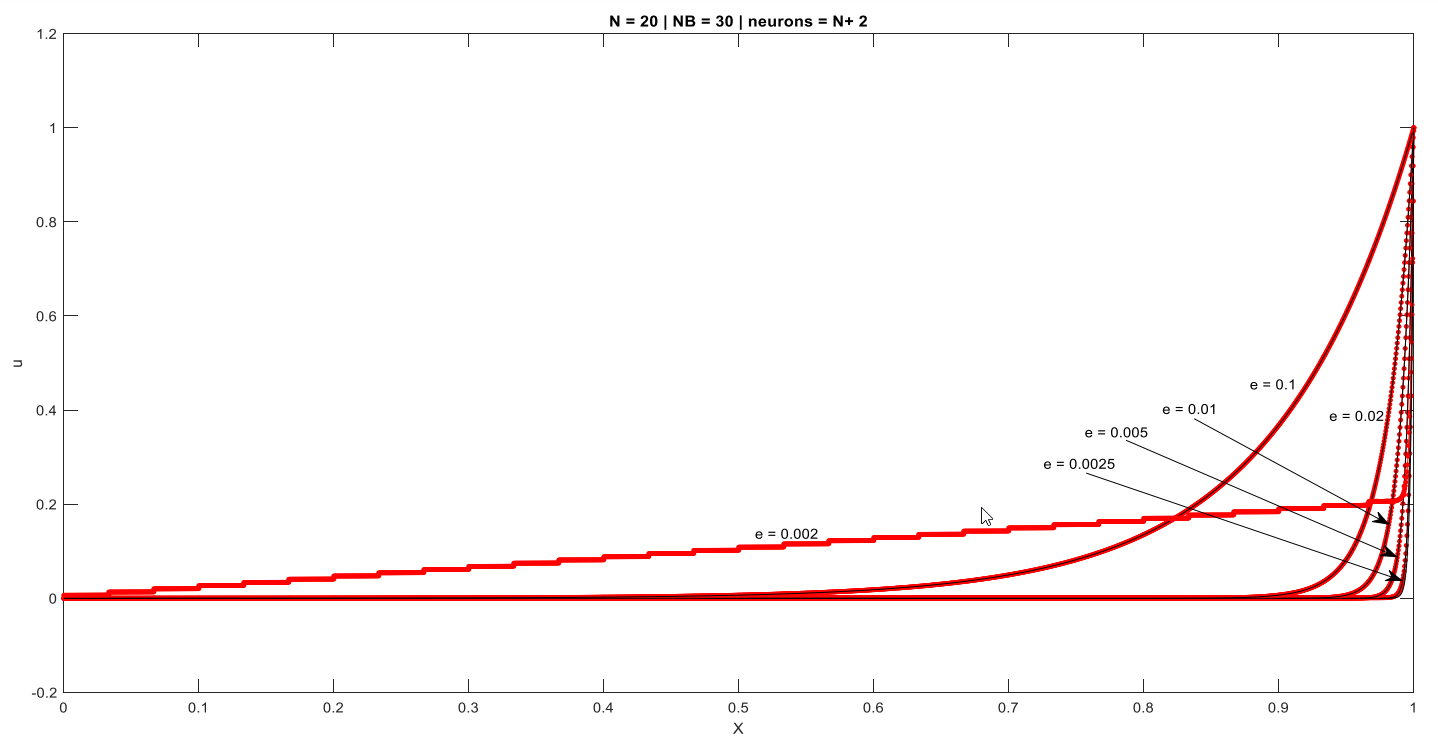}	
	\caption{Solutions of steady 1D ADE for different $\epsilon$ values with PINDELM in red dotted lines and exact solutions in black lines.} 
	\label{fig_1dAD_PINDELM}
\end{figure}

\subsection{Comparison of activation matrices}
See \cref{fig_PIELM_A,fig_PIELM_eigens,fig_PIELM_A_our,fig_PIELM_eigens_our}.
\begin{figure}[!h]
        \begin{minipage}[h]{.4\textwidth}
	\centering 
	\includegraphics[width=\textwidth]{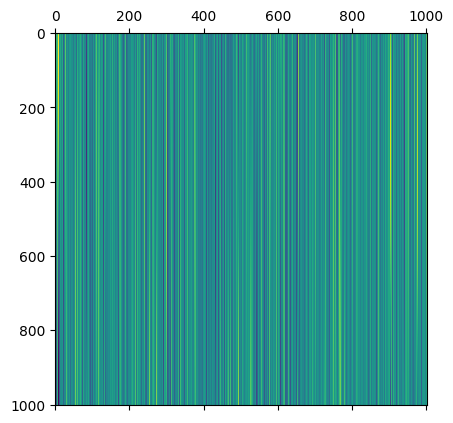}	
	\caption{Visualisation of $H$ of PIELM for steady 1D ADE with $\epsilon = 1.0$.} 
	\label{fig_PIELM_A}%
    \end{minipage}
    \hfill
    \begin{minipage}[h]{.5\textwidth}
	\centering 
	\includegraphics[width=\textwidth]{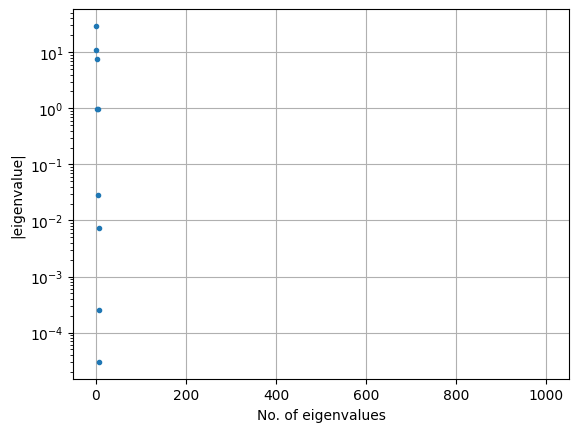}	
	\caption{Eigenvalues of $H$ of PIELM for steady 1D ADE with $\epsilon = 1.0$.} 
	\label{fig_PIELM_eigens}%
    \end{minipage}
    \vspace{-0.5cm}
\end{figure}

\begin{figure}[!h]
        \begin{minipage}[h]{.4\textwidth}
	\centering 
	\includegraphics[width=\textwidth]{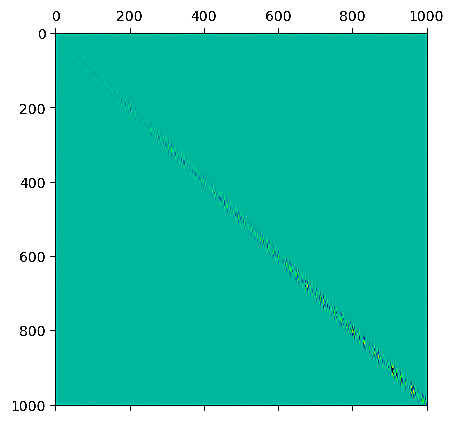}	
	\caption{Visualisation of $H$ of PIELM with our architecture for steady 1D ADE with $\epsilon = 1.0$ when shifted Gaussian encoding is used.} 
	\label{fig_PIELM_A_our}%
    \end{minipage}
    \hfill
    \begin{minipage}[h]{.5\textwidth}
	\centering 
	\includegraphics[width=\textwidth]{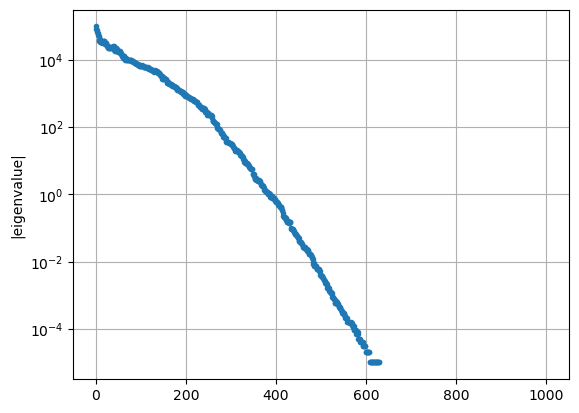}	
	\caption{Eigenvalues of $H$ of PIELM with our architecture for steady 1D ADE with $\epsilon = 1.0$ when shifted Gaussian encoding is used.} 
	\label{fig_PIELM_eigens_our}%
    \end{minipage}
    \vspace{-0.5cm}
\end{figure}

\subsection{Comparison of orders of weights and residuals}
See \cref{fig_PIELM_WnB,fig_PIELM_WnB_our}.
\begin{figure}[!h]
	\centering 
	\includegraphics[width=\textwidth]{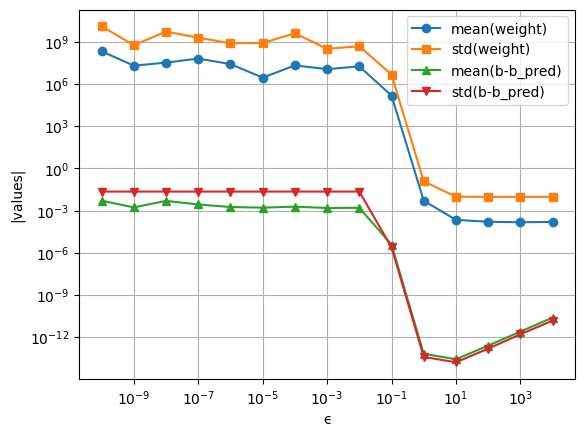}	
	\caption{Mean and standard deviation of weights and residual (b - b\_pred) of PIELM solution of steady 1D ADE for various $\epsilon$ values.} 
	\label{fig_PIELM_WnB}
\end{figure}

\begin{figure}[!h]
	\centering 
	\includegraphics[width=\textwidth]{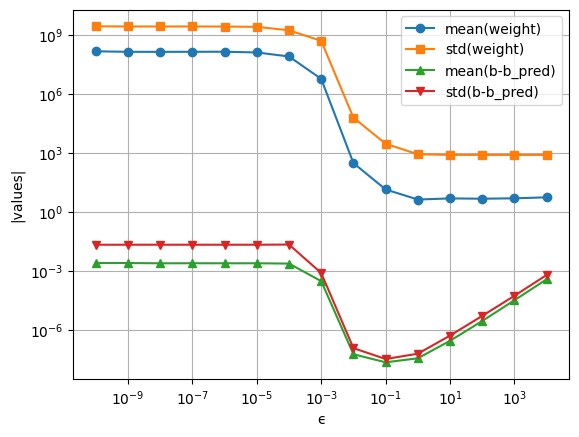}	
	\caption{Mean and standard deviation of weights and residual (b - b\_pred) of PIELM solution of steady 1D ADE for various $\epsilon$ values when shifted Gaussian encoding is used.} 
	\label{fig_PIELM_WnB_our}
\end{figure}

\section{Complex Multiscale Function}
See \cref{fig_stiffdata} for the plot of the exact multiscale function.
\begin{figure}[!h]
	\centering 
	\includegraphics[width=\linewidth]{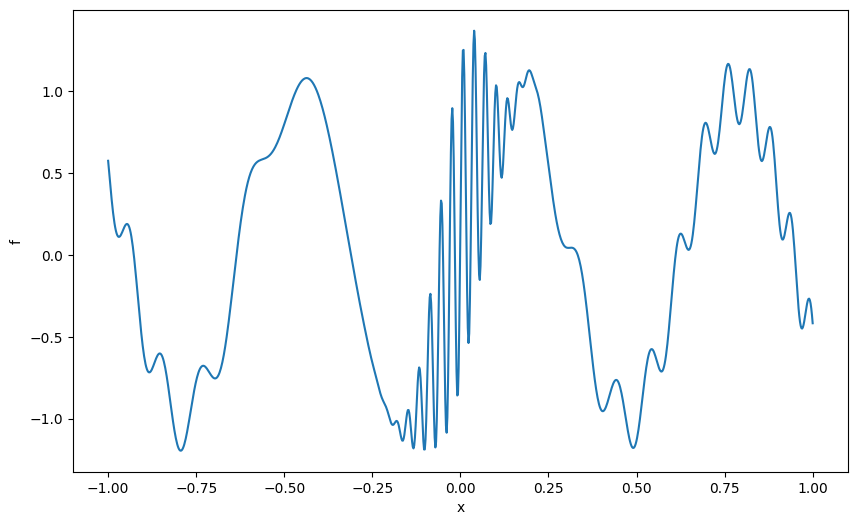}	
	\caption{Points generated from the target function.} 
	\label{fig_stiffdata}
\end{figure}
\subsection{Comparison of activation matrices}
See \cref{fig_ELMMatrix,fig_OurELMMatrix} for the comparison of Gram activation matrices formed in different approaches.
\begin{figure}[!h]
        \begin{minipage}[h]{.48\textwidth}
	\centering 
	\includegraphics[width=\textwidth]{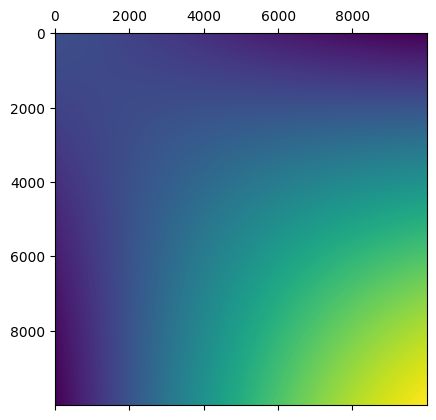}	
	\caption{Visualisation of Activation matrix for the ELM.} 
	\label{fig_ELMMatrix}
    \end{minipage}
    \hfill
    \begin{minipage}[h]{.48\textwidth}
	\centering 
	\includegraphics[width=\textwidth]{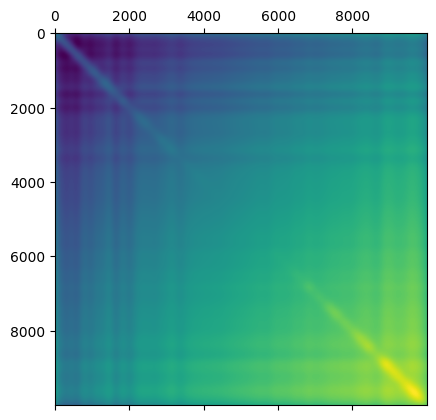}	
	\caption{Visualisation of Activation matrix for our ELM after Gaussian Encoding.} 
	\label{fig_OurELMMatrix}
    \end{minipage}
\end{figure}

\section{Multiscale Image}
\subsection{Comparison of activation Gram matrices}
See \cref{fig_vGMatrix} for comparison of the Gram matrix formed in various methods.
\begin{figure}[!h]
        \begin{minipage}[h]{.3\textwidth}
	\centering 
	\includegraphics[width=\textwidth]{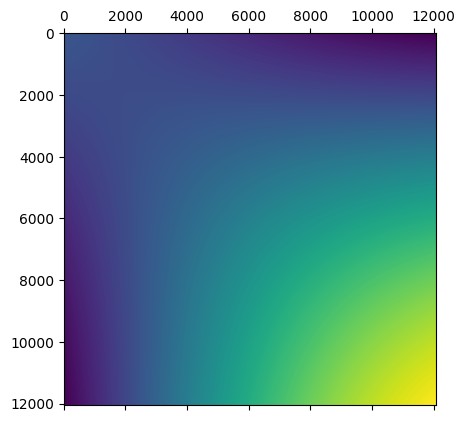}	
	\caption{Visualisation of Activation matrix for the ELM.}
    \end{minipage}
    \hfill
    \begin{minipage}[h]{.3\textwidth}
	\centering 
	\includegraphics[width=\textwidth]{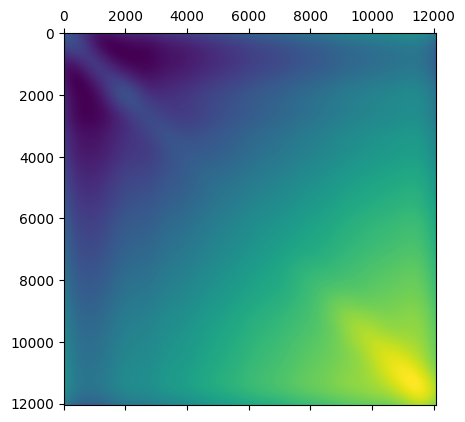}	
	\caption{Visualisation of Activation matrix for our ELM (ELMGF).} 
    \end{minipage}
    \hfill
    \begin{minipage}[h]{.3\textwidth}
	\centering 
	\includegraphics[width=\textwidth]{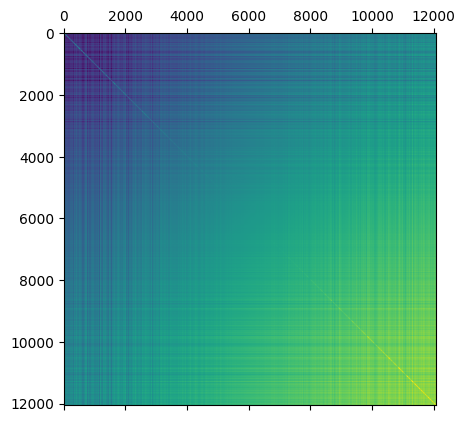}	
	\caption{Visualisation of Activation matrix for our ELM (ELMGF3).} 
    \end{minipage}
    \captionsetup{labelformat=empty}
    \caption{Fitting Van Gogh's Eye.}
    \label{fig_vGMatrix}
\end{figure}

\end{document}